\documentclass{article} 
\usepackage{iclr2026_conference,times}


\usepackage{amsmath,amsfonts,bm}









\def\eqref#1{equation~\ref{#1}}









\def\1{\bm{1}}










\DeclareMathAlphabet{\mathsfit}{\encodingdefault}{\sfdefault}{m}{sl}
\SetMathAlphabet{\mathsfit}{bold}{\encodingdefault}{\sfdefault}{bx}{n}




\def\sR{{\mathbb{R}}}








\newcommand{\R}{\mathbb{R}}




\usepackage{hyperref}
\usepackage{url}
\usepackage{algorithm}
\usepackage{algorithmicx}
\usepackage{algpseudocode}
\usepackage{booktabs}
\usepackage{multirow}
\usepackage{graphicx}
\usepackage{subcaption}
\usepackage{adjustbox}
\usepackage{arydshln}

\newtheorem{theorem}{Theorem}

\newtheorem{proof}{Proof}
\newtheorem{assumption}{Assumption}

\newcommand{\normal}{\mathrm{normal}}
\newcommand{\diag}{\mathrm{diag}}
\newcommand{\norm}[1]{\left\|#1\right\|}

\title{REG: A Regularization Optimizer for Robust Training Dynamics}


\author{Zehua Liu, Han Wu, Xiaojin Fu, Shuqi Liu, Xiongwei Han, Tao Zhong, Mingxuan Yuan \\
Huawei Noah's Ark Lab \\
\texttt{liuzehua@connect.hku.hk}}


%

\iclrfinalcopy 
\begin{document}

\maketitle

\begin{abstract}
Optimizers are crucial for the efficient training of Large Language Models (LLMs). While AdamW is the de facto standard, recent structure-aware optimizers like Muon have emerged, which regularize gradient updates by operating on entire weight matrices. The Muon optimizer balances the gradient updates along all the directions. However, Muon's reliance on the matrix sign function can lead to training instability, exhibits incompatibility when fine-tuning models pre-trained with AdamW. To address these limitations, we propose \textbf{REG}, a novel optimizer that replaces Muon's aggressive matrix sign operator with the Row-and-Column-Scaling (RACS) operator. Theoretically grounded in balancing a matrix, the RACS operator regularizes the update steps in a less drastic manner, making it simpler to implement and more compatible with established training dynamics. Through extensive empirical experiments on LLM training, we demonstrate that our REG optimizer not only achieves superior performance and stability over AdamW, but also maintains consistency with the AdamW training paradigm. This consistency is particularly evident during the fine-tuning stage, where REG optimizer avoids the performance degradation observed with Muon.
\end{abstract}

\section{Introduction} \label{sec:introduction}

The rapid advancements of Large Language Models (LLMs) \citep{achiam_2023_gpt,guo_2025_deepseek, team_2023_gemini} have made training efficient and effective optimizers a critical area of research. While Adam \citep{kingma_2017_adam} and AdamW \citep{loshchilov_2019_decoupled} remain standard, recent empirical studies have revealed a key challenge in large-scale LLM training: the momentum matrices within optimizers often become ill-conditioned. This indicates that a few principal directions dominate the parameter updates, which can hinder convergence and stability. This observation has motivated a new class of optimizers, such as Muon \citep{jordan_2024_muon} and GaLore \citep{zhao_2024_galore}, that explicitly address the structural properties of parameter matrices.

Among these, the Muon optimizer is notable for its unique approach of treating weights as matrices and applying a matrix sign function to orthogonalize the momentum-averaged gradient. While this method successfully addresses the ill-conditioning problem by reducing the spectral condition number, it introduces significant implementation complexity and has been observed to cause training instability. Specifically, the Muon optimizer's aggressive rescaling of singular values can lead to crashes. Furthermore, its training dynamics are not fully consistent with those of AdamW, which can cause performance degradation when fine-tuning an AdamW-trained model. These drawbacks highlight a need for a more stable and computationally simple alternative that can still regularize the update dynamics.

In this paper, we propose a regularized gradient descent with momentum optimizer, dubbed as \textbf{REG}. Our approach is grounded in the observation that ill-conditioned momentum matrices can be improved by an operator that makes their rows and columns more uniform in magnitude. Instead of the computationally complex matrix sign function used in Muon, we propose using a Row-and-Column-Scaling (RACS) operator. The RACS operator, which involves simple diagonal matrix multiplications, is computationally efficient and straightforward to implement. We provide theoretical grounding for this approach by drawing on classic results from numerical analysis, which show that row and column scaling can significantly improve a matrix's conditioning. Critically, we demonstrate that the RACS operator provides a less drastic regularization than the matrix sign function, resulting in a training process that is more stable and compatible with AdamW-trained models.

Our final proposed algorithm integrates this RACS operator into the standard Gradient Descent with Momentum (GDM) framework, along with two practical enhancements: weight decay to prevent overfitting and an RMS-based rescaling to ensure consistent update magnitudes. The latter is particularly robust for the empirically superior choice of $\ell_2$-norm scaling, where we derive a closed-form solution for the RMS of the normalized matrix.

Our contributions are summarized as follows:
\begin{itemize}
\item We propose a novel optimizer, named REG, which regularizes the update steps using the computationally efficient and stable RACS operator.
\item We provide a theoretical justification for using the RACS operator by connecting it to classic results on matrix equilibration, and we provide a closed-form expression for the RMS of the update matrix when using $\ell_2$-norm normalization.
\item Through a series of empirical experiments, we demonstrate that our optimizer achieves superior performance for LLM training while offering greater stability and consistency with the AdamW training paradigm compared to Muon, particularly during the fine-tuning stage.
\end{itemize}

\section{Related Work} \label{sec:related}

\paragraph{Optimizers} The development of optimization algorithms for training machine learning models has seen significant progress since the introduction of stochastic gradient descent with momentum (SGDM) by \citet{polyak_1964_some}. Subsequent research has led to a multitude of advanced optimizers, including but not limited to those proposed by \citet{dozat_2016_incorporating,duchi_2011_adaptive, graves_2013_generating,loshchilov_2019_decoupled,kingma_2017_adam,martens_2020_new,shazeer_2018_adafactor,zeiler_2012_adadelta}. Among these, Adam \citep{kingma_2017_adam} and its variant with decoupled weight decay, AdamW \citep{loshchilov_2019_decoupled}, have become the de facto standard for training large language models (LLMs). Unlike SGDM, the Adam family of optimizers employs adaptive learning rates for each parameter, allowing the training process to more effectively navigate the objective function landscape, especially in regions with varying curvature. However, the use of first and second momentum terms in Adam introduces a considerable memory overhead.

To address this memory challenge, several works have focused on developing more memory-efficient and accelerated variants of Adam. Adafactor \citep{shazeer_2018_adafactor} reduces memory usage by omitting the first momentum term and approximating the second momentum term using a factored representation inspired by the divergence method \citep{lee_1999_learning}. Similarly, \citet{anil_2019_memory} proposed a memory-efficient variation of Adagrad \citep{duchi_2011_adaptive}. Another successful strategy involves the use of low-rank approximation. The LoRA method \citep{hu_2022_lora} facilitates the efficient fine-tuning of LLMs by training low-rank matrices $A$ and $B$ instead of the full weight matrix $W$. Extending this concept to the optimizer itself, GaLore \citep{zhao_2024_galore} modifies the Adam optimizer by replacing the full gradient with its low-rank approximation, thereby reducing the memory footprint of both the first and second momentum terms. A related approach, Flora \citep{hao_2024_flora}, is based on a similar principle. In a different vein, the Muon optimizer \citep{jordan_2024_muon} has shown promising results in LLM training \citep{liu_2025_muon,team_2025_kimi}. This approach suggests that an optimizer should balance the update matrix to ensure that all parameters are updated along all directions. While the original Muon optimizer faced challenges with training very large LLMs, its variants, such as those discussed by \citet{team_2025_kimi}, have achieved success in training models with up to 1 trillion parameters.

\paragraph{Matrix Balancing} Matrix balancing is a well-established problem in numerical analysis, traditionally studied for its application in the numerical solution of linear equations \citep{hildebrand_1987_introduction,horn_2012_matrix}. The Row-And-Column-Scaling (RACS) operator, as discussed in works such as \citep{bauer_1963_optimally,sluis_1969_condition}, is a widely used method for this purpose. Research by \citet{bauer_1963_optimally,forsythe_1955_best,sluis_1969_condition,yang_2024_qwen25math} has explored the effects of RACS on the condition numbers and other properties of matrices. The RACS operator also finds application in solving optimization problems, particularly in balancing matrices within primal-dual formulations, as demonstrated by \citet{ruiz_2001_scaling,pock_2011_diagonal}.

\section{Algorithm} \label{sec:algorithm}

\subsection{Motivation}

We consider the optimization problem of minimizing a differentiable function $f: \mathbb{R}^{m \times n} \to \mathbb{R}$ over a parameter matrix $W \in \mathbb{R}^{m \times n}$. A foundational and widely-adopted method for this class of problems is Gradient Descent with Momentum (GDM) \citep{polyak_1964_some}. At each iteration $k$, given the current parameter matrix $W_k$, the momentum matrix $M_k$, a learning rate $\alpha$, and a momentum coefficient $\mu$, the GDM update rules are defined as:
\begin{equation}
\begin{aligned}
& M_{k+1} = \mu M_k + (1 - \mu) \nabla f (W_k), \\ 
& W_{k+1} = W_k - \alpha M_{k+1}.
\end{aligned}
\end{equation} 
Recent empirical studies on LLMs have revealed that for 2D parameter matrices within Transformer architectures, the corresponding momentum matrix $M$ is frequently observed to be ill-conditioned, exhibiting a large spectral condition number ($\sigma_{\max}/\sigma_{\min}$) \citep{gupta_2018_shampoo, jordan_2024_muon, zhao_2024_galore}. A high spectral condition number indicates that the matrix's energy is concentrated along a few principal directions, which in turn implies that parameter updates are dominated by these directions. This suggests the presence of an intrinsically low-rank structure within the update dynamics.

This observation motivates the introduction of a computationally efficient regularization operator, denoted by $\mathrm{reg}(\cdot)$, applied to the momentum matrix $M_k$ with the objective of improving its conditioning. We thus propose the regularized GDM optimizer, which incorporates this regularization step into the standard GDM framework:
\begin{equation}
\begin{aligned}
& M_{k+1} = \mu M_k + (1 - \mu) \nabla f (W_k), \\ 
& M_{k+1} = \mathrm{reg} (M_{k+1}), \\
& W_{k+1} = W_k - \alpha M_{k+1}.
\end{aligned}
\end{equation} 
The choice of the regularization operator $\mathrm{reg}(\cdot)$ is critical. For instance, the Muon optimizer employs the matrix sign function, which theoretically reduces the spectral condition number of the resulting matrix to one \citep{jordan_2024_muon}. In this work, we investigate the RACS operator. Let $\mathcal{D}_k$ denote the set of non-singular diagonal $k \times k$ matrices. The RACS operator is defined as $\mathrm{reg}(M; D_1, D_2) = D_1 M D_2$ for specifically chosen matrices $D_1 \in \mathcal{D}_m$ and $D_2 \in \mathcal{D}_n$. The central problem is to determine appropriate diagonal matrices $D_1$ and $D_2$ to improve the matrix's properties, specifically by minimizing a measure of its ill-conditioning.

The problem of optimal diagonal scaling to improve a matrix's conditioning is a classic topic in numerical analysis. For this, we recall the following foundational result on matrix equilibration established by \citet{sluis_1969_condition}.
\begin{theorem} \label{thm:1}
Let $M \in \R^{m \times n}$ be a matrix. In the following, the norm $\| \cdot \|^*$ may be any H{\"o}lder norm or the Frobenius norm.

(a) If $\kappa (M) := \| M \|_\infty / \| M \|^*$, then $\kappa (DM)$ is minimal if all rows in $DM$ have equal $1$-norm. 

(b) If $\kappa (M) := \| M \|_1 / \| M \|^*$, then $\kappa (MD)$ is minimal if all columns in $MD$ have equal $1$-norm. 

(c) If $M$ is invertible and $\kappa (M) := (\max_{i,j} |M_{ij}|)  \| M^{-1} \|^*$, then $\kappa (DM)$ is minimal if all rows in $DM$ have equal $\infty$-norm. 

(d) If $M$ is invertible and $\kappa (M) := (\max_{i,j} |M_{ij}|)  \| M^{-1} \|^*$, then $\kappa (MD)$ is minimal if all columns in $MD$ have equal $\infty$-norm.  
\end{theorem}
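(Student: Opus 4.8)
The plan is to treat all four parts through a single mechanism, proving (a) in full, deducing (b) by applying (a) to $M^\top$ (using $\norm{M^\top}_\infty = \norm{M}_1$ and the fact that a Hölder or Frobenius norm of the transpose is again a norm of the same family, so the reference norm stays admissible), and likewise reducing (d) to (c). For part (a) I would write $r_i = \sum_j |M_{ij}|$ for the $1$-norm of the $i$-th row of $M$, so that for $D = \diag(d_1,\dots,d_m)$ one has $\norm{DM}_\infty = \max_i |d_i| r_i$. Both quantities defining $\kappa(DM)$ are absolutely homogeneous in $D$ and depend on $D$ only through $(|d_i|)_i$; hence $\kappa(DM)$ is invariant under $D \mapsto cD$, and I may assume $d_i > 0$ throughout.

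The core is a \emph{domination} step. Given any $D$ with $d_i > 0$, set $\beta := \norm{DM}_\infty = \max_i d_i r_i$ and define the equilibrated scaling $\tilde D := \diag(\beta/r_i)$, for which every row of $\tilde D M$ has $1$-norm exactly $\beta$, so $\norm{\tilde D M}_\infty = \beta = \norm{DM}_\infty$. Since $d_i r_i \le \beta$ forces $d_i \le \beta/r_i = \tilde d_i$ for each $i$, we get $D \le \tilde D$ entrywise. The decisive ingredient is a monotonicity lemma: for every Hölder norm and the Frobenius norm, $\norm{\,\cdot\,}^*$ is nondecreasing under nonnegative row scaling, i.e.\ $0 \le d_i \le \tilde d_i$ for all $i$ implies $\norm{DM}^* \le \norm{\tilde D M}^*$. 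Granting it, $\kappa(\tilde D M) = \norm{DM}_\infty / \norm{\tilde D M}^* \le \norm{DM}_\infty / \norm{DM}^* = \kappa(DM)$, and since $\tilde D M$ has all row $1$-norms equal, every scaling is dominated by an equilibrated one, which proves (a) (all equilibrated scalings differ only by an overall scale, hence share the same minimal $\kappa$). Part (c) runs identically after writing $\kappa(DM) = (\max_i d_i s_i)\,\norm{M^{-1} D^{-1}}^*$ with $s_i = \max_j |M_{ij}|$ and $(DM)^{-1} = M^{-1}D^{-1}$: taking $\tilde D = \diag(\beta/s_i)$ with $\beta = \max_i d_i s_i$ gives $\tilde d_i \ge d_i$, hence $\tilde d_i^{-1} \le d_i^{-1}$, and the \emph{column}-scaling form of the monotonicity lemma applied to $M^{-1}$ yields $\norm{M^{-1}\tilde D^{-1}}^* \le \norm{M^{-1} D^{-1}}^*$, so $\kappa(\tilde D M) \le \kappa(DM)$ with $\tilde D M$ having equal row $\infty$-norms.

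The main obstacle is the monotonicity lemma, which is exactly where the restriction to Hölder/Frobenius norms is used. For the operator norm $\norm{\,\cdot\,}_p$ I would verify it by a pointwise estimate: for row scaling, $\norm{DMx}_p^p = \sum_i d_i^p |(Mx)_i|^p$ is nondecreasing in each $d_i \ge 0$ for fixed $x$, so taking the supremum over $\norm{x}_p = 1$ preserves the inequality (the $p=\infty$ case replaces the sum by a maximum); for column scaling I would substitute $y = Dx$ to write $\norm{MD}_p = \sup_{y \ne 0} \norm{My}_p / \norm{D^{-1}y}_p$ and note that $\norm{D^{-1}y}_p$ is nonincreasing in the $d_j$, so the ratio is nondecreasing. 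The Frobenius case is immediate from $\norm{DM}_F^2 = \sum_{i,j} d_i^2 M_{ij}^2$ and its column analogue. I would stress that this is only scaling monotonicity, not the (false) claim that operator norms are monotone under arbitrary entrywise domination. The one remaining detail is a zero row (parts (a)/(c)) or zero column (parts (b)/(d)): such a line contributes nothing to either norm and its scaling factor is free, so it can be omitted from the equilibration without affecting the argument, and I would dispose of it in a sentence.
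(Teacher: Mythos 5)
The paper never proves this statement: Theorem \ref{thm:1} is quoted as a classical equilibration result of van der Sluis (1969), cited without proof, so there is no in-paper argument to compare yours against; your proposal has to stand on its own, and it does. The domination step is the right mechanism and is carried out correctly: for a positive row scaling $D$ with $\beta = \norm{DM}_\infty = \max_i d_i r_i$, the equilibrated scaling $\tilde D = \diag(\beta/r_1,\ldots,\beta/r_m)$ satisfies $\tilde d_i \ge d_i$, leaves the numerator unchanged, and by your scaling-monotonicity lemma can only increase the reference norm $\norm{\cdot}^*$, giving $\kappa(\tilde D M) \le \kappa(DM)$; since equilibrated scalings are unique up to an overall scalar and $\kappa$ is scale- and sign-invariant, this establishes minimality, not merely domination. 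Your proofs of the monotonicity lemma are sound for both readings of ``H\"older norm'' (the induced operator $p$-norms, via the pointwise estimate in $x$ for row scaling and the substitution $y = Dx$ for column scaling, and equally for entrywise $p$-norms) as well as for Frobenius, and you are right to emphasize that only scaling monotonicity is claimed --- general entrywise-domination monotonicity of operator norms is false and is not needed. The transpose reductions (b) from (a) and (d) from (c) are legitimate exactly for the reason you give: the admissible family of reference norms is closed under transposition ($\norm{A^\top}_p = \norm{A}_q$ for conjugate exponents, Frobenius transpose-invariant), so the transposed problem is again an instance of the already-proved part. The zero-row caveat is handled adequately (and is vacuous in (c)/(d), where invertibility of $M$ forces $s_i > 0$). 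In substance your argument is a faithful, self-contained reconstruction of van der Sluis's original equilibration proof, which is a useful addition given that the paper relies on this theorem as its main theoretical justification for the RACS operator but delegates its proof entirely to the citation.
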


The functions $\kappa(\cdot)$ in Theorem \ref{thm:1}, while distinct from the standard spectral condition number ($\sigma_{\max}(M) / \sigma_{\min}(M)$), serve a conceptually analogous purpose: they quantify the ``imbalance" of a matrix. This notion is intrinsically linked to modern concepts of a matrix's effective dimensionality, such as the stable rank ($\|M\|_F^2 / \|M\|_2^2$) and effective rank. A matrix with a low stable rank, for instance, has its energy concentrated in a few dominant singular vectors, a characteristic that often manifests as rows or columns with disproportionately large norms. The process of ``equilibration"—scaling rows and columns to have uniform norms—directly counteracts this imbalance. Therefore, minimizing $\kappa(\cdot)$ via equilibration can be interpreted as a computationally tractable proxy for improving the matrix's effective properties, pushing it towards a state where its constituent rows and columns are more uniform in magnitude. This principle provides a robust theoretical foundation for utilizing row or column normalization as a regularization strategy.

Inspired by these findings, we propose a normalization operator, $\normal(\cdot; p)$, which, for a given matrix $M \in \mathbb{R}^{m \times n}$, normalizes either its rows or columns based on their $\ell_p$-norm. The choice of axis is determined by the matrix dimensions to minimize computational overhead:
\begin{equation}
\normal(M; p) = 
\begin{cases}
    \mathrm{diag}(\| M_{1,:} \|_p^{-1}, \ldots, \| M_{m,:} \|_p^{-1}) M & \text{if } m \le n, \\
    M \mathrm{diag}(\| M_{:,1} \|_p^{-1}, \ldots, \| M_{:,n} \|_p^{-1}) & \text{if } m > n,
\end{cases}
\end{equation}
where $M_{i,:}$ denotes the $i$-th row of $M$ and $M_{:,j}$ denotes the $j$-th column of $M$.
This leads to a regularized optimizer, parameterized by the norm order $p$:
\begin{equation} \label{algo:naive_reg}
\begin{aligned}
& M_{k+1} = \mu M_k + (1 - \mu) \nabla f (W_k), \\ 
& M_{k+1} = \normal (M_{k+1}; p), \\
& W_{k+1} = W_k - \alpha M_{k+1}.
\end{aligned}
\end{equation} 

The theoretical results on matrix equilibration primarily support the use of $p=1$ or $p=\infty$. However, our empirical investigations, particularly in the context of training LLMs, indicate that $p=2$ yields superior performance. This discrepancy highlights a known gap between classical numerical linear algebra theory and the complex dynamics of deep learning optimization. A comprehensive ablation study to determine the optimal order $p$ across diverse tasks is beyond the scope of this work. Instead, we focus our experimental validation on the SFT of LLMs, where the effectiveness of the $p=2$ case is demonstrated.

\subsection{Practical Enhancements for Large-Scale Training}

To enhance the practical applicability and robustness of the proposed optimizer, particularly for large-scale models, we incorporate two established techniques, following the methodology of recent work \citet{liu_2025_muon}.

\paragraph{Weight Decay}
The first enhancement is the inclusion of weight decay, a standard regularization technique in deep learning. It is implemented by adding a term proportional to the current weights $W_k$ to the update rule. This penalizes large weight values, which helps to prevent overfitting and improve generalization.

\paragraph{Consistent Update Magnitude}
A second, more critical, modification addresses the need for consistent update magnitudes. The naive regularization in Equation~\ref{algo:naive_reg} normalizes the rows or columns of the momentum matrix $M_{k+1}$, but does not control its overall scale. The magnitude of the update could therefore vary unpredictably. Following \citet{liu_2025_muon}, we rescale the normalized momentum matrix such that its root mean square falls within a predefined target range (e.g., 0.2 to 0.4).

For a generic $p$-norm, the RMS of the normalized matrix $\normal(M;p)$ does not have a simple closed-form expression. However, for the empirically superior case where $p=2$, a closed-form solution exists.

\begin{theorem} \label{thm:rms}
For any matrix $M \in \mathbb{R}^{m \times n}$, the root mean square of the $\ell_2$-normalized matrix $\tilde{M} := \normal(M; 2)$ is given by $\sqrt{\frac{1}{\max \{ m, n \}}}$.
\end{theorem}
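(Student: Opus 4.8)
The plan is to reduce the whole computation to the Frobenius norm. Recall that for any $A \in \R^{m \times n}$ the root mean square is $\mathrm{RMS}(A) = \norm{A}_F / \sqrt{mn}$, where $\norm{A}_F^2 = \sum_{i,j} A_{ij}^2$ is the sum of squared entries. Thus it suffices to evaluate $\norm{\tilde{M}}_F^2$ and divide by $mn$; the $\ell_2$-normalization makes this evaluation immediate, because it forces the relevant rows or columns of $\tilde{M}$ to have unit norm.

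First I would split along the case distinction built into $\normal(\cdot; 2)$. When $m \le n$, the operator left-multiplies $M$ by $\diag(\norm{M_{1,:}}_2^{-1}, \ldots, \norm{M_{m,:}}_2^{-1})$, rescaling each row so that $\norm{\tilde{M}_{i,:}}_2 = 1$ for every $i$. Summing over the $m$ rows gives $\norm{\tilde{M}}_F^2 = \sum_{i=1}^m \norm{\tilde{M}_{i,:}}_2^2 = m$, hence $\mathrm{RMS}(\tilde{M}) = \sqrt{m/(mn)} = \sqrt{1/n}$. Since $m \le n$ forces $n = \max\{m,n\}$, this is exactly the claimed value.

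The case $m > n$ is entirely symmetric: the operator right-multiplies by a diagonal matrix rescaling each column to unit $\ell_2$-norm, so $\norm{\tilde{M}}_F^2 = \sum_{j=1}^n \norm{\tilde{M}_{:,j}}_2^2 = n$ and $\mathrm{RMS}(\tilde{M}) = \sqrt{n/(mn)} = \sqrt{1/m} = \sqrt{1/\max\{m,n\}}$. The unifying principle I would emphasize is that the normalization always acts along the shorter axis, so the number of unit-norm vectors produced is $\min\{m,n\}$ and therefore $\norm{\tilde{M}}_F^2 = \min\{m,n\}$; dividing by $mn = \min\{m,n\} \cdot \max\{m,n\}$ cancels the factor of $\min\{m,n\}$ and leaves $1/\max\{m,n\}$ under the square root.

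I do not expect any genuine obstacle here, since the statement follows directly from the definition of $\ell_2$-normalization together with the identity $\mathrm{RMS}(A)^2 = \norm{A}_F^2/(mn)$. The single point deserving a word of care is the implicit standing assumption that every row (resp.\ column) being normalized is nonzero, so that the inverse norms defining the diagonal matrices exist; under this assumption each normalized vector genuinely has unit $\ell_2$-norm, the counts above are exact, and the two cases combine into the single closed form.
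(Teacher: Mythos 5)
Your proof is correct and takes essentially the same approach as the paper's: a direct computation from the definition, observing that every normalized row (or column) has unit $\ell_2$-norm, so the squared RMS equals $\min\{m,n\}/(mn) = 1/\max\{m,n\}$. The only cosmetic difference is that you package the computation through the identity $\mathrm{RMS}(A)^2 = \norm{A}_F^2/(mn)$, whereas the paper writes out the double sum over entries directly; both reduce to the same one-line cancellation.
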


\begin{proof}
The proof follows directly from the definitions. Without loss of generality, assume $m \le n$, which implies normalization is performed row-wise. The case $m > n$ is analogous. The squared RMS of $\tilde{M}$ is:
\begin{equation*}
\mathrm{RMS}(\tilde{M})^2 = \frac{1}{mn} \sum_{i=1}^m \sum_{j=1}^n \tilde{M}_{ij}^2 = \frac{1}{mn} \sum_{i=1}^m \sum_{j=1}^n \left( \frac{M_{ij}}{\| M_{i,:} \|_2} \right)^2 = \frac{1}{mn} \sum_{i=1}^m \frac{\sum_{j=1}^n M_{ij}^2}{\| M_{i,:} \|_2^2} = \frac{1}{n}.
\end{equation*}
Thus, $\mathrm{RMS}(\tilde{M}) = \sqrt{\frac{1}{n}}$. Since we assumed $m \le n$, we have $n = \max\{m, n\}$, which completes the proof.
\end{proof}

\subsection{The Final Regularized Optimizer}

By integrating the aforementioned components, we arrive at the final version of our REG optimizer. At each iteration $k$, the update rules are defined as follows:
\begin{equation} \label{algo:reg}
\begin{aligned}
& M_{k+1} = \mu M_k + (1 - \mu) \nabla f (W_k), \\
& \tilde{M}_{k+1} = \normal (M_{k+1}; p), \\
& \hat{M}_{k+1} = \tilde{M}_{k+1} \cdot \frac{\rho_{\mathrm{target}}}{\mathrm{RMS}(\tilde{M}_{k+1})}, \\
& W_{k+1} = W_k - \alpha (\hat{M}_{k+1} + \lambda W_k),
\end{aligned}
\end{equation}
where $\lambda$ is the weight decay coefficient and $\rho_{\mathrm{target}}$ is a hyperparameter representing the target RMS of the update matrix. For the specific case of $p=2$, the denominator $\mathrm{RMS}(\tilde{M}_{k+1})$ can be replaced by its deterministic value from Theorem~\ref{thm:rms}.

\section{Theoretical Convergence Analysis} \label{sec:theory}

In this section, we present a theoretical convergence analysis of our regularized optimizer. A direct convergence proof for the full algorithm, as defined in \eqref{algo:reg}, is highly non-trivial and remains an open problem in numerical optimization. Consequently, our analysis is restricted to a simplified variant of the regularized optimizer, denoted as the naive version and given by \eqref{algo:naive_reg}. While a comprehensive theoretical guarantee for the full algorithm is beyond the scope of this work, the analysis presented here provides foundational insights into the convergence behavior of our method.

The analysis in this section focuses on the case where the momentum parameter is set to zero, i.e., $\mu = 0$. The convergence analysis for the case where $\mu \neq 0$ is significantly more complex, as the normalization operator $\normal$ breaks the linearity between the gradient term $\nabla f(W)$ and the momentum term $M$. This nonlinearity prevents the direct application of classical momentum analysis techniques. Nevertheless, the convergence proof for the $\mu = 0$ case establishes a crucial theoretical foundation for our method.

\begin{theorem} \label{thm:3}
Assume that $f: \sR^{m \times n} \to \sR$ is continuously differentiable and its gradient $\nabla f$ is $L$-Lipschitz. Consider the iteration \eqref{algo:naive_reg} with $\mu = 0$. For a sufficiently small learning rate $\alpha$ that depends only on the dimensions $m$ and $n$, the sequence of gradients converges to zero in Frobenius norm:
\begin{equation*}
\lim_{k \to \infty} \| \nabla f(W_k) \|_F = 0.
\end{equation*}
\end{theorem}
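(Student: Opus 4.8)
The plan is to run the standard descent-lemma argument for smooth nonconvex optimization, but with the search direction given by the normalized gradient $N_k := \normal(\nabla f(W_k); p)$ in place of $\nabla f(W_k)$. Throughout write $G_k := \nabla f(W_k)$ and, without loss of generality, take $m \le n$ so that $\normal$ acts row-wise (the case $m > n$ is symmetric, with columns replacing rows). Adopting the convention that a zero row of $G_k$ is sent to a zero row (so that $\normal$ is well defined), two purely algebraic identities drive everything. The alignment between the gradient and the update direction is
\begin{equation*}
\langle G_k, N_k \rangle = \sum_{i=1}^m \frac{\| (G_k)_{i,:} \|_2^2}{\| (G_k)_{i,:} \|_p},
\end{equation*}
which for $p = 2$ collapses to $\sum_{i=1}^m \| (G_k)_{i,:} \|_2 \ge \| G_k \|_F$; and the step size obeys $\| N_k \|_F^2 = \sum_{i=1}^m \| (G_k)_{i,:} \|_2^2 / \| (G_k)_{i,:} \|_p^2$, which for $p=2$ equals the number of nonzero rows of $G_k$ and is bounded by $\min\{m,n\}$. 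For general $p$ these two quantities are bounded below and above by constants $c_1(p,m,n) > 0$ and $c_2(p,m,n) < \infty$ (via the norm-equivalence inequalities $\|v\|_2 \le \|v\|_p$ or $\|v\|_p \le n^{|1/p-1/2|}\|v\|_2$ applied row-wise), which is exactly the sense in which the admissible learning rate depends only on $m$ and $n$.

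With these in hand I would invoke the descent lemma, valid because $\nabla f$ is $L$-Lipschitz: since $W_{k+1} - W_k = -\alpha N_k$,
\begin{equation*}
f(W_{k+1}) \le f(W_k) - \alpha \langle G_k, N_k \rangle + \tfrac{L \alpha^2}{2} \| N_k \|_F^2 \le f(W_k) - \alpha c_1 \| G_k \|_F + \tfrac{L \alpha^2}{2} c_2 .
\end{equation*}
Picking $\alpha$ small relative to $c_1, c_2, L$ (hence depending only on $m, n, p$) and telescoping over $k = 0, \dots, K-1$, together with a lower bound $f \ge f^\star$ (an implicit boundedness hypothesis, without which even an affine $f$ with constant nonzero gradient already contradicts the conclusion), gives summable control on the per-step decrease. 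The target is to sharpen this into $\sum_k \| G_k \|_F^2 < \infty$ or an equivalent summable quantity, from which $\lim_{k \to \infty} \| G_k \|_F = 0$ is immediate, since the terms of a convergent series must vanish.

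The main obstacle is precisely this sharpening. Because the normalization rescales each row to unit norm, the descent it buys scales only like the first power $\| G_k \|_F$ and is offset by the additive term $\tfrac{L\alpha^2}{2} c_2$, which does not shrink as $G_k \to 0$. In the regime of small-but-nonzero gradient rows this additive term can dominate the linear gain, so the per-step bound loses its sign and $f$ may transiently increase; a naive telescoping then yields only $\liminf_k \| G_k \|_F = O(\alpha)$, not a genuine limit of zero. Closing this gap is the crux on which the stated theorem rests, and I expect essentially all the genuine difficulty to live here. The route I would pursue is to combine the Lipschitz bound on consecutive gradients, $\| G_{k+1} - G_k \|_F \le L \| W_{k+1} - W_k \|_F \le L \alpha \sqrt{c_2}$ — so that $\|G_k\|_F$ can move by at most $O(\alpha)$ per step — with the smallness of $\alpha$, in order to bound the total increase accrued during small-gradient steps against the finite budget $f(W_0) - f^\star$ consumed during large-gradient steps, thereby preventing the repeated re-entry of large gradients. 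Making this accounting airtight, and in particular ruling out persistent $O(\alpha)$-amplitude oscillation of $\| G_k \|_F$ rather than merely controlling its $\liminf$, is the delicate step that the remainder of the argument must resolve.
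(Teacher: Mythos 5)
You run the same computation as the paper up to the point where the argument becomes hard, and your diagnosis of the crux is correct---in fact, more correct than the paper. Your two algebraic identities, $\langle G_k, N_k \rangle = \sum_i \|(G_k)_{i,:}\|_2^2/\|(G_k)_{i,:}\|_p$ and $\|N_k\|_F^2 = \sum_i \|(G_k)_{i,:}\|_2^2/\|(G_k)_{i,:}\|_p^2$, are exactly the paper's Step 2. Where you bound the quadratic term by a dimension-only constant $c_2$ and observe that the resulting additive term $\frac{L\alpha^2}{2}c_2$ does not shrink as $G_k \to 0$ (so only $\liminf_k \|G_k\|_F = O(\alpha)$ follows), the paper instead claims (its Step 3) that the ratio $\gamma_k = \|N_k\|_F^2 / \langle G_k, N_k\rangle$ is uniformly bounded by a constant depending only on the dimension, rewrites the descent inequality as $f(W_{k+1}) \le f(W_k) - \alpha \langle G_k, N_k\rangle \left(1 - \frac{L\alpha\gamma_k}{2}\right)$, and telescopes to get summability of $\langle G_k, N_k\rangle$, hence exact convergence. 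But that boundedness claim rests on an algebra error: with $w_i = \|v_i\|_2^2/\|v_i\|_p$, the paper's substitution $z_i = \|v_i\|_2/\|v_i\|_p$ gives $w_i z_i = \|v_i\|_2^3/\|v_i\|_p^2$, not the numerator's $\|v_i\|_2^2/\|v_i\|_p^2$; the identity only holds when every row has unit $\ell_2$ norm. The correct representation makes $\gamma_k$ a weighted average of $1/\|v_i\|_p$, which blows up as the gradient vanishes---for $p=2$ one has exactly $\gamma_k = m/\sum_i \|(G_k)_{i,:}\|_2 \to \infty$ as $G_k \to 0$, so no fixed $\alpha$ yields the paper's uniform sufficient-decrease constant $c$. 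The paper's proof fails at precisely the step you flagged as the crux (and it also silently invokes a lower bound $f_{\mathrm{inf}}$ in Step 5 that is not among the theorem's hypotheses, a gap you noticed as well via the affine counterexample).

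Moreover, the obstruction you describe cannot be engineered away, because the theorem is false as stated with a constant learning rate. Take $m = n = 1$ and $f(w) = \frac{1}{2}w^2$ (so $L = 1$ and $f$ is bounded below): the iteration reduces to $w_{k+1} = w_k - \alpha\,\mathrm{sign}(w_k)$, and for any fixed $\alpha > 0$ and $w_0 \in (0,\alpha)$ the iterates oscillate with period two between $w_0$ and $w_0 - \alpha$, so $|\nabla f(w_k)|$ alternates between $w_0$ and $\alpha - w_0$ and never tends to zero; a row-wise version of the same quadratic, $f(W) = \frac{1}{2}\|W\|_F^2$, gives counterexamples for all $m \le n$, since each row's radius $r_k$ evolves as $r_{k+1} = |r_k - \alpha|$. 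This is consistent with the paper's own Theorem 4, which at $\mu = 0$ yields only $\liminf_k g_k \le L\alpha m/2$---exactly the $O(\alpha)$ neighborhood your telescoping produces. So your proposal, while incomplete as a proof of the stated claim, correctly proves the true statement ($\liminf_k \|G_k\|_F = O(\alpha)$ under an added bounded-below hypothesis), and the accounting argument you hoped would rule out persistent $O(\alpha)$ oscillation provably cannot succeed; exact convergence of $\|\nabla f(W_k)\|_F$ to zero would require a diminishing step size $\alpha_k \to 0$ with $\sum_k \alpha_k = \infty$, not a fixed $\alpha$ depending only on $m$ and $n$.
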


The following theorem demonstrates that the proposed algorithm converges to a stationary point, albeit within a neighborhood. The size of this neighborhood is explicitly determined by the step size $\alpha$ and the momentum parameter $\mu$. Specifically, a smaller step size $\alpha$ and a momentum parameter $\mu$ closer to zero can lead to a tighter bound on the limit inferior of the sum of row norms of the gradient. 

\begin{theorem}[Convergence of Row-Normalized Gradient Descent with Momentum] \label{thm:4}
Let the following assumptions hold:
\begin{assumption}[$L$-smoothness]
The objective function $f: \R^{m \times n} \to \R$ is $L$-smooth with respect to the Frobenius norm, i.e., for any $X, Y \in \R^{m \times n}$, we have
$$ \norm{\nabla f(X) - \nabla f(Y)}_F \leq L \norm{X - Y}_F $$
\end{assumption}
\begin{assumption}[Bounded Below]
The function $f$ is bounded below by a scalar $f^* > -\infty$.
\end{assumption}

Consider the algorithm with $p=2, m \le n$, defined for $k \ge 0$ as:
\begin{equation*}
\begin{aligned}
& M'_{k+1} = \mu M_k + (1 - \mu) \nabla f (W_k), \\
& M_{k+1} = \normal (M'_{k+1}; 2) = \diag(\norm{(M'_{k+1})_{1,:}}_2^{-1}, \ldots, \norm{(M'_{k+1})_{m,:}}_2^{-1}) M'_{k+1}, \\
& W_{k+1} = W_k - \alpha M_{k+1},
\end{aligned}
\end{equation*}
where $0 < \mu < 1$, $\alpha > 0$, and we initialize $M_0 = \mathbf{0}$. We assume $(M'_{k+1})_{i,:} \ne \mathbf{0}$ for all $i, k$, ensuring the normalization is well-defined. Let $g_k = \sum_{i=1}^m \norm{(\nabla f(W_k))_{i,:}}_2$.

Then, the limit inferior of the sum of row norms of the gradient is bounded as:
$$
\liminf_{k \to \infty} g_k \le \frac{L\alpha m}{2} + \frac{2\mu m}{1-\mu}
$$
\end{theorem}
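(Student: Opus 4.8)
The plan is to combine the descent lemma from $L$-smoothness with a per-iteration lower bound on the correlation between the true gradient $\nabla f(W_k)$ and the actual row-normalized update direction $M_{k+1}$. The crucial structural fact is that row normalization forces every row of $M_{k+1}$ to have unit $\ell_2$-norm, so $\|M_{k+1}\|_F^2 = m$ for all $k \ge 0$. This immediately tames the quadratic term: writing $G_k := \nabla f(W_k)$ and using $W_{k+1} - W_k = -\alpha M_{k+1}$, smoothness gives $f(W_{k+1}) \le f(W_k) - \alpha \langle G_k, M_{k+1}\rangle + \tfrac{L\alpha^2}{2}\|M_{k+1}\|_F^2 = f(W_k) - \alpha\langle G_k, M_{k+1}\rangle + \tfrac{L\alpha^2 m}{2}$.

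First I would establish the inner-product lower bound $\langle G_k, M_{k+1}\rangle \ge g_k - \tfrac{2\mu m}{1-\mu}$. To do this, invert the momentum recursion as $G_k = \tfrac{1}{1-\mu}(M'_{k+1} - \mu M_k)$, so that $\langle G_k, M_{k+1}\rangle = \tfrac{1}{1-\mu}\big(\langle M'_{k+1}, M_{k+1}\rangle - \mu\langle M_k, M_{k+1}\rangle\big)$. The first term simplifies exactly: since $(M_{k+1})_{i,:}$ is the unit vector along $(M'_{k+1})_{i,:}$, we have $\langle M'_{k+1}, M_{k+1}\rangle = \sum_i \|(M'_{k+1})_{i,:}\|_2$, and a reverse triangle inequality applied to $(M'_{k+1})_{i,:} = (1-\mu)(G_k)_{i,:} + \mu (M_k)_{i,:}$ together with $\|(M_k)_{i,:}\|_2 \le 1$ bounds this below by $(1-\mu)g_k - \mu m$. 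For the second term, both $M_k$ and $M_{k+1}$ have unit rows, so each row-wise inner product lies in $[-1,1]$ by Cauchy--Schwarz, giving $\langle M_k, M_{k+1}\rangle \le m$. Substituting these two estimates yields the claimed bound.

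Next I would turn the one-step estimate into a global statement. Plugging the inner-product bound into the descent inequality and rearranging gives $\alpha g_k \le f(W_k) - f(W_{k+1}) + \alpha\big(\tfrac{2\mu m}{1-\mu} + \tfrac{L\alpha m}{2}\big)$. Summing over $k = 0, \ldots, K-1$ telescopes the function-value differences, and the bounded-below assumption $f(W_K) \ge f^*$ controls the telescoped term, so the running average $\tfrac{1}{K}\sum_{k < K} g_k$ is at most $\tfrac{f(W_0)-f^*}{\alpha K} + \tfrac{2\mu m}{1-\mu} + \tfrac{L\alpha m}{2}$. A short contradiction argument then converts this averaged bound into the desired $\liminf$: if $g_k$ eventually exceeded $C := \tfrac{L\alpha m}{2} + \tfrac{2\mu m}{1-\mu}$ by a fixed positive margin, the running average would be forced above $C$ in the limit, contradicting the vanishing of the $1/K$ term.

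The main obstacle is the inner-product lower bound in the second paragraph: momentum injects a stale direction $\mu M_k$ that is generally misaligned with the current gradient, so one cannot expect $\langle G_k, M_{k+1}\rangle$ to equal $g_k$ as it does when $\mu = 0$. The algebra above shows the misalignment costs only an additive $O(\mu m/(1-\mu))$ term, which is precisely the residual neighborhood in the bound; the care needed is simply in applying the reverse triangle inequality and the unit-row Cauchy--Schwarz estimate with the correct signs, and in checking that the $k=0$ case with $M_0 = \mathbf{0}$ remains consistent (there the first update coincides with the $\mu=0$ case and the bound holds trivially). None of this is technically deep.
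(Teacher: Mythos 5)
Your proposal is correct, and it reaches the paper's exact bound with the same constants, but it handles the momentum cross term by a genuinely different and simpler device. The paper's proof, after the same descent lemma, the same inversion $G_k = \tfrac{1}{1-\mu}(M'_{k+1} - \mu M_k)$, and the same reverse-triangle bound $\langle M'_{k+1}, M_{k+1}\rangle \ge (1-\mu)g_k - \mu m$, treats the term $-\mu\langle M_k, M_{k+1}\rangle$ via Young's inequality and then introduces a Lyapunov function $L_k = f(W_k) + c\norm{M_k}_F^2$ with $c = \tfrac{\alpha\mu}{2(1-\mu)}$ chosen to cancel the resulting $\norm{M_k}_F^2$ term; the telescoping is then done on $L_k$ rather than on $f(W_k)$. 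You instead bound $\langle M_k, M_{k+1}\rangle \le m$ directly by row-wise Cauchy--Schwarz, exploiting the structural fact that all rows of $M_k$ and $M_{k+1}$ are unit vectors (with $M_0 = \mathbf{0}$ giving an even better bound at $k=0$), so you can telescope $f$ itself with no auxiliary function. Since $\norm{M_k}_F^2$ is identically $m$ for $k \ge 1$, the paper's Lyapunov correction is in fact constant along the trajectory and does no real work, so your route exposes that the "tricky" Lyapunov construction the paper emphasizes is unnecessary in this setting; the paper's machinery would only pay off if the row norms of $M_k$ were not constant. One further point in your favor: your closing contradiction (Cesàro) argument is the rigorous way to pass to the $\liminf$, whereas the paper's final step — letting $N \to \infty$ in $\min_{0 \le k < N} g_k \le \tfrac{f(W_0)-f^*}{N\alpha} + C$ — literally yields only $\inf_k g_k \le C$, and needs exactly the averaging argument you supply to conclude $\liminf_{k\to\infty} g_k \le C$.
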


While $g_k$ is not the standard Frobenius norm of the gradient matrix, it serves as a useful proxy. The Frobenius norm of the gradient, $\| \nabla f(W_k) \|_F$, can be bounded by $g_k$ given that the $L_2$ norm of each row of the normalized matrix $M_{k+1}$ is unity. However, we do not have a direct theoretical guarantee on the convergence of $\| \nabla f(W_k) \|_F$ to zero in the presence of momentum ($\mu \ne 0$), hence the result is expressed in terms of an upper bound on the gradient's limit inferior. A full theoretical proof for this general case is beyond the scope of this paper, and we leave a more rigorous analysis to future work. The provided theorem offers a foundational understanding of the algorithm's behavior, showing that it does not diverge and approaches a region of low gradient.

\section{Experiments} \label{sec:experiments}

\subsection{SFT with Full-Parameters} 

\paragraph{Math Word Problems} In this part, we conduct an experimental evaluation of mathematical reasoning capabilities. Specifically, we fine-tune the Qwen2.5-Math-1.5B model \citep{yang_2024_qwen25math} using a 20K data subset sampled from the NuminaMath-CoT dataset \citep{li_2024_numinamath}. We compare our proposed REG optimizer with three established baselines: the standard AdamW \citep{loshchilov_2019_decoupled}, Muon \citep{jordan_2024_muon}, and NGD (Newtonian Gradient Descent). The performance of the fine-tuned models is evaluated on several downstream mathematical reasoning tasks, including GSM8K \citep{cobbe_2021_training}, MATH500 \citep{lightman_2023_lets}, and AIME24.

The AIME24 dataset, consisting of only 30 problems, is noted to yield unstable results with high variance; therefore, readers should focus on the more robust and statistically significant results from the GSM8K and MATH500 benchmarks. All experiments were conducted using the EvalScope framework \citep{modelscope_2024_evalscope}. The test accuracies for models fine-tuned with different optimizers are summarized in Table \ref{table:2}.

\begin{table*}[htbp]
\centering
\renewcommand{\arraystretch}{1.3}
\begin{tabular}{c |c c c | c}
\toprule
Optimizers & GSM8K(\%) & MATH500(\%) & AIME24(\%) & Average(\%) \\
\hline
Qwen2.5-Math-1.5B & 28.1 & 24.4 & \bf{13.3} & 21.9 \\
Muon & 49.1 & 58.6 & 10.0 & 39.2 \\
NGD & 70.1 & 60.2 & 6.7 & 45.6 \\
AdamW & \bf{77.8} & 61.8 & 10.0 & 49.8 \\
{\bf{REG(ours)}} & 76.5 & \bf{64.8} & 10.0 & \bf{50.4} \\
\bottomrule
\end{tabular}
\caption{Test Accuracies of fine-tuned models on downstream mathematical reasoning tasks. REG optimizer achieves competitive and, in some cases, superior performance compared to AdamW and other baselines.}
\label{table:2}
\end{table*}

As shown in Table \ref{table:2}, our proposed REG optimizer achieves highly competitive performance, surpassing AdamW in terms of average accuracy. Specifically, REG optimizer obtains a remarkable 64.8\% accuracy on MATH500, outperforming all other optimizers. On the GSM8K benchmark, REG optimizer's performance (76.5\%) is on par with the leading AdamW (77.8\%), with a marginal difference that is not statistically significant.

In contrast, the Muon optimizer exhibits a significant performance degradation on the GSM8K task, achieving only 49.1\% accuracy, which is considerably lower than both AdamW and our method. Our method's ability to maintain high performance across multiple tasks demonstrates its robustness and effectiveness for mathematical fine-tuning.

\paragraph{Mathematical Optimization Modeling Problem} We conduct an empirical experiment on mathematical optimization modeling problems \citep{ramamonjison_2023_nl4opt, huang_2024_mamo, huang_2025_orlm}. This task aims to generate solvable mathematical models from a natural language description of an optimization problem, and then use a solver to find the optimal solution. We use Qwen3-4B-Instruct-2507 \citep{team_2025_qwen3} as the base model and fine-tune it using a mixed training dataset. The model's performance is evaluated on several standard benchmarks: MAMO \citep{huang_2024_mamo}, NL4OPT \citep{ramamonjison_2023_nl4opt}, IndustryOR-fixed \citep{xiao_2025_survey}, and OptMATH-Bench \citep{lu_2025_optmath}. The training results comparing different optimizers are presented in Table \ref{table:1}.


\begin{table*}[htbp]
\centering
\renewcommand{\arraystretch}{1.2}
\begin{adjustbox}{max width=\textwidth}
\begin{tabular}{l c cc  c  c  c}
\toprule
 & \multicolumn{2}{c}{\textbf{MAMO}} & \multirow{2}{*}{\makebox[1.8cm][c]{\textbf{NL4OPT}}} & \multirow{2}{*}{\makebox[2.4cm][c]{\textbf{IndustryOR}}} & \multirow{2}{*}{\makebox[3.0cm][c]{\textbf{OptMATH-Bench}}} & \multirow{2}{*}{\makebox[1.9cm][c]{\textbf{Average}}} \\
 \cline{2-3}
 & \textbf{EasyLP} & \textbf{ComplexLP} &  &  &  &  \\
\midrule
Qwen3-4B & 74.08 & 22.28 & 82.04 & 27.00 & 9.84 & 43.05 \\
AdamW                  & 83.59 & 33.18 & 86.12 & 36.00 & 4.15 & 48.61 \\
Muon                   & 84.66 & 33.65 & 83.26 & 37.00 & 5.18 & 48.75 \\
\textbf{REG(ours)}     & \textbf{84.66} & \textbf{35.55} & \textbf{87.35} & \textbf{37.00} & \textbf{11.40} & \textbf{51.19} \\
\bottomrule
\end{tabular}
\end{adjustbox}
\caption{Performance comparison of different optimizers on mathematical optimization modeling tasks. “MAMO” is grouped with two sub-columns: EasyLP and ComplexLP. Other datasets occupy single columns with vertically merged headers.}
\label{table:1}
\end{table*}
As shown in Table \ref{table:1}, fine-tuning with any optimizer significantly improves the model's performance across all datasets compared to the original Qwen3-4B-Instruct-2507 model. Our REG optimizer consistently achieves the highest accuracy on most benchmarks. Specifically, it outperforms all other optimizers on NL4OPT, MAMO-ComplexLP, and OptMATH-Bench, while matching the best performance on MAMO-EasyLP and IndustryOR. The most notable improvement is observed on the challenging OptMATH-Bench dataset, where our method achieves 11.40\% accuracy, more than doubling the performance of AdamW and Muon. The average accuracy across all five datasets further highlights the superiority of our approach, with REG optimizer scoring 51.19\%, compared to 48.61\% for AdamW and 48.75\% for Muon. This demonstrates the effectiveness of REG optimizer in enhancing the model's ability to handle complex and diverse mathematical optimization problems.

\subsection{Application to Image Classification}

Having validated the REG optimizer on natural language processing benchmarks, we now extend our evaluation to the domain of computer vision. To this end, we assess its performance on the CIFAR-100 image classification task \citep{krizhevsky_2009_learning}, which contains 60,000 color images across 100 classes. Specifically, we train ResNet-18 and ResNet-50 models \citep{he_2016_identity} from scratch using the REG optimizer for parameter updates. We use SGD, NGD, and Adam as baseline optimizers. We select Adam over AdamW as it is often the preferred choice for computer vision tasks. The results of this experiment are summarized in Table \ref{table:3}, and the training curves for loss and accuracy are presented in Figure \ref{fig:all_images}. 

\begin{table*}[htbp]
\centering
\renewcommand{\arraystretch}{1.3}
\begin{tabular}{c |c c c c}
\toprule
Models & SGD & NGD & Adam & \bf{REG(ours)} \\
\hline
ResNet-18 (\%) & 41.37 & 22.43 & 58.65 & \bf{59.03} \\
ResNet-50 (\%)& 33.04 & 13.01 & \bf{59.62} & 59.14 \\
\bottomrule
\end{tabular}
\caption{Test accuracies on the CIFAR-100 dataset for various optimizers and models.}
\label{table:3}
\end{table*}

\begin{figure}[ht]
\centering
\begin{subfigure}[b]{0.45\textwidth}
\includegraphics[width=\textwidth]{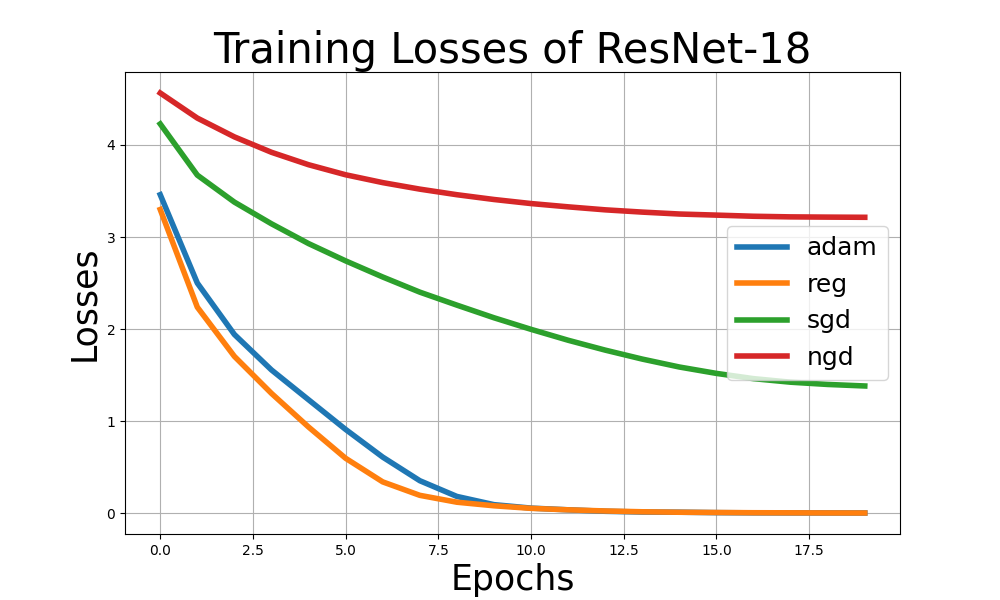}
\caption{ResNet-18 loss}
\label{fig:image1}
\end{subfigure}
\hfill
\begin{subfigure}[b]{0.45\textwidth}
\includegraphics[width=\textwidth]{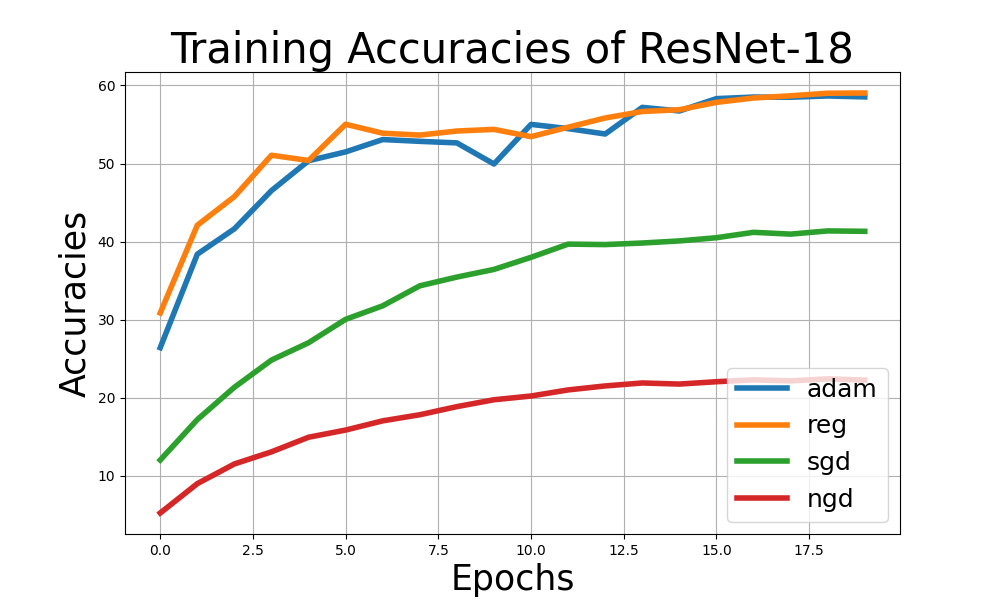}
\caption{ResNet-18 Acc}
\label{fig:image2}
\end{subfigure}
\vskip\baselineskip
\begin{subfigure}[b]{0.45\textwidth}
\includegraphics[width=\textwidth]{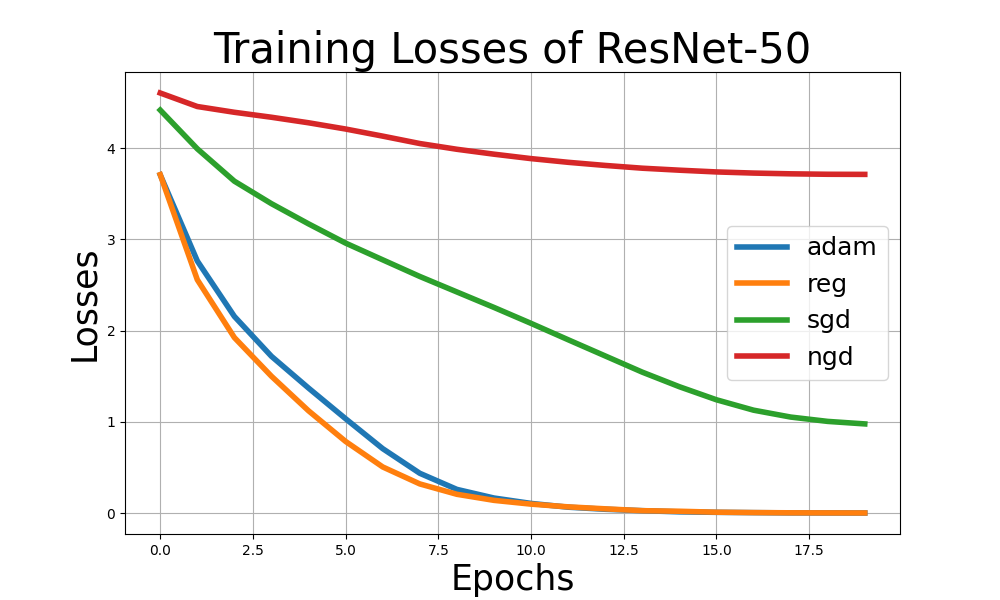}
\caption{ResNet-50 loss}
\label{fig:image3}
\end{subfigure}
\hfill
\begin{subfigure}[b]{0.45\textwidth}
\includegraphics[width=\textwidth]{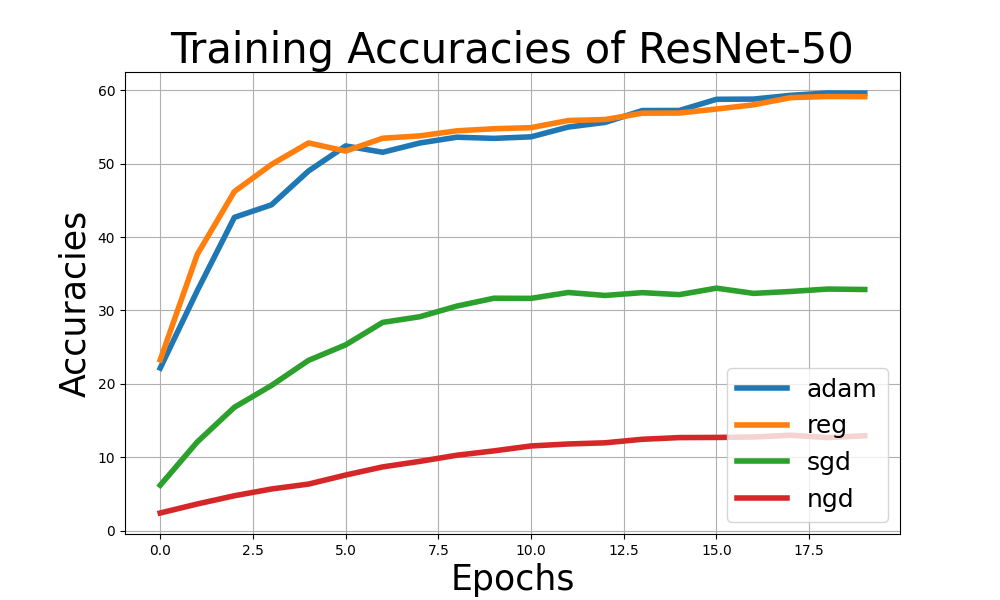}
\caption{ResNet-50 Acc}
\label{fig:image4}
\end{subfigure}
\caption{Training loss and accuracy curves on the CIFAR-100 image classification task.}
\label{fig:all_images}
\end{figure}

As shown in Table \ref{table:3}, the REG optimizer achieves superior performances among all the optimizers.This indicates that while Adam remains a very strong baseline, the adaptive rescaling mechanism of our method is highly effective compared to non-adaptive or poorly-formulated second-order approaches like NGD.

A more detailed analysis of the training curves in Figure \ref{fig:all_images} reveals a key advantage of our optimizer. The REG optimizer demonstrates the fastest convergence in terms of both loss reduction and accuracy improvement. Notably, its training loss curve descends more rapidly and its training accuracy curve ascends more steeply than all other optimizers. This suggests that the REG optimizer is particularly efficient at quickly locating a good, albeit potentially sub-optimal, parameter configuration early in the training process. This rapid convergence is a desirable property for large-scale training where computational resources are a primary concern, as it allows for the possibility of achieving a reasonable performance with fewer training iterations.

\subsection{Ablation Studies}

\paragraph{On the Necessity of a Hybrid AdamW Approach.}
In this section, we investigate the necessity and efficacy of a hybrid optimization strategy that integrates AdamW updates for specific parameter groups within the \textbf{REG} optimization framework. This study is motivated by established practices in similar optimization algorithms, such as Muon, where it is a recommended practice to train embedding layers using AdamW while applying the core optimization algorithm to all other parameters. We hypothesize that this hybrid approach is crucial for mitigating potential numerical instabilities that may arise from applying matrix-based update mechanisms, such as the Newton-Schulz iteration, to the unique structural properties of embedding matrices, which are typically large and sparse.

To empirically validate this hypothesis, we conducted an ablation study comparing two distinct optimizer configurations. The first is the pure \textbf{REG} optimizer, which applies its update rule uniformly across all model parameters. The second, designated as \textbf{REG-with-AdamW}, is a hybrid variant that employs the AdamW update rule exclusively for the Large Language Model's (LLM) embedding layers, while retaining the \textbf{REG} update for all other parameters. A comparative analysis of their performance on a suite of downstream tasks is presented in Table \ref{table:4}, providing a direct assessment of the practical benefits of the proposed hybrid approach.

\begin{table*}[htbp]
\centering
\renewcommand{\arraystretch}{1.3}
\begin{tabular}{l |c c}
\toprule
Datasets & \textbf{REG} & \textbf{REG-With-AdamW} \\
\hline
GSM8K (\%) & \bf{77.8} & 76.5 \\
MATH500 (\%) & 62.4 & \bf{64.8} \\
AIME24 (\%) & 3.3 & \bf{10.0} \\
\hline
NL4OPT(\%) & \bf{88.6} & 87.4 \\
MAMO-EasyLP(\%) & \bf{85.6} & 84.7 \\
MAMO-ComplexLP(\%) & 28.4 & \bf{35.6} \\
IndustryOR(\%) & 26.0 & \bf{37.0} \\
OptMATH-Bench(\%) & 7.2 & \bf{11.4} \\
\hline
Average (\%) & 47.4 & \bf{50.9} \\
\bottomrule
\end{tabular}
\caption{Performance comparison of \textbf{REG} and \textbf{REG-with-AdamW} on various tasks. The table presents the fine-tuned model's accuracy on a range of datasets.}
\label{table:4}
\end{table*}
The results presented in Table \ref{table:4} demonstrate that the hybrid \textbf{REG-with-AdamW} optimizer consistently outperforms the pure \textbf{REG} optimizer across a majority of the evaluated tasks, and yields a superior average performance. This empirical evidence supports our hypothesis regarding the necessity of a hybrid strategy and confirms the practical advantage of applying AdamW updates specifically to the embedding layers while using the \textbf{REG} optimizer for the remaining parameters. Consequently, we recommend the adoption of the \textbf{REG-with-AdamW} configuration for optimal performance.

\paragraph{Hyperparameter Selection for Order $p$.}
We investigated the selection of the hyperparameter $p$ for Large Language Model (LLM) training. While theoretical guarantees for balancing update matrices exist for $p=1$ and $p=+\infty$, these values pose a practical challenge: the Root Mean Square (RMS) norm of the update matrix lacks a closed-form solution. This absence complicates the necessary rescaling of updates, thus impacting computational efficiency. Conversely, for $p=2$, the RMS norm has a closed-form solution, which significantly improves computational efficiency, despite the absence of theoretical guarantees. The practical performance of different hyperparameter choices for $p$ is detailed in Table \ref{table:5}.

\begin{table*}[htbp]
\centering
\renewcommand{\arraystretch}{1.3}
\begin{tabular}{l | c c c c}
\toprule
 & GSM8K (\%) & MATH500 (\%) & AIME24 (\%) & Average (\%) \\
\hline
$p=1$ & 75.8 & 63.6 & 10.0 & 49.8 \\
$p=2$ & \bf{76.5} & \bf{64.8} & \bf{10.0} & \bf{50.4} \\
$p=+\infty$ & 75.6 & 63.2 & 3.3 & 47.4 \\
\bottomrule
\end{tabular}
\caption{Performance comparison of different hyperparameter $p$ on mathematical tasks. The table shows the fine-tuned model's accuracy on various datasets.}
\label{table:5}
\end{table*}

Given the limited sample size of the AIME24 dataset (only 30 problems), we suggest its results be considered with caution. Based on the more extensive experiments conducted on GSM8K and MATH500, we observed that while $p=2$ lacks a theoretical guarantee, it achieves superior performance compared to the other hyperparameters. This finding highlights an interesting gap between theoretical predictions and empirical reality. Consequently, we recommend using $p=2$ for training, as it offers a compelling combination of computational efficiency and empirical effectiveness.

\section{Conclusion}

In this paper, we introduced a novel optimizer, \textbf{REG}, designed to enhance the training of LMs through the use of the RACS operator. We provided a comprehensive analysis of its theoretical underpinnings and validated its effectiveness through extensive empirical experiments. The results consistently demonstrated that the REG optimizer achieves superior performance across diverse tasks, both language and vision tasks. Furthermore, our findings suggest that REG is more aligned with the performance characteristics of AdamW than Muon, indicating its significant potential for future SFT applications.

\section*{Limitations}
The scope of this study is primarily constrained to the evaluation of the reg optimizer within the SFT process. A more comprehensive and rigorous analysis would necessitate an investigation that spans the entire training pipeline, from the pre-training phase to the fine-tuning stage. Such an end-to-end evaluation, conducted across models of varying scales, would be essential for establishing the optimizer's scaling laws and providing a holistic performance assessment. However, due to the substantial computational resources required for large-scale pre-training, our empirical validation was necessarily limited to the SFT context. Therefore, the generalizability of our findings to the pre-training phase remains an open question for future research.

\bibliography{reference.bib}
\bibliographystyle{iclr2026_conference}

\newpage
\appendix
\section{Additional Proofs} \label{sec:app_proof}

\begin{proof}[Proof of Theorem \ref{thm:3}]
Without loss of generality, we assume $m \leq n$. The proof for the case $m \geq n$ is analogous, as the Frobenius inner product is symmetric with respect to transposition, i.e., $\langle A, B\rangle_F = \langle A^\top, B^\top \rangle_F$ for any matrices $A, B \in \sR^{m \times n}$.

\textbf{Step 1. Preliminaries and Key Inequality}.
From the $L$-Lipschitz continuity of the gradient $\nabla f$, we have the following property, often known as the Descent Lemma:
$$
f(Y) \le f(X) + \langle \nabla f(X), Y-X \rangle_F + \frac{L}{2} \| Y-X \|_F^2
$$
where $\langle A, B \rangle_F = \mathrm{Tr}(A^T B)$ is the Frobenius inner product.

Let $G_k := \nabla f(W_k)$ and $\tilde{G}_k := \normal (G_k; p)$. By substituting the update rule $W_{k+1} = W_k - \alpha \tilde{G}_k$ into this inequality with $X = W_k$ and $Y = W_{k+1}$, and recalling that $G_k = \nabla f(W_k)$, we obtain:
\begin{equation} \label{eq:descent} 
\begin{aligned}
f(W_{k+1}) & \le f(W_k) + \langle G_k, W_{k+1} - W_k \rangle_F + \frac{L}{2} \| W_{k+1} - W_k \|_F^2 \\
& = f(W_k) - \alpha \langle G_k, \tilde{G}_k \rangle_F + \frac{L\alpha^2}{2} \| \tilde{G}_k \|_F^2 
\end{aligned}
\end{equation}
This inequality is central to our proof. The goal is to show that as long as $G_k \neq 0$, the sum of the last two terms on the right-hand side is negative, guaranteeing that $f(W_k)$ is a strictly decreasing sequence.

\textbf{Step 2. Analysis of the Update Direction $\tilde{G}_k$}.
By definition, for $m \le n$, the $i$-th row of $\tilde{G}_k$, denoted $(\tilde{G}_k)_i$, is:
$$
(\tilde{G}_k)_i = \frac{(G_k)_i}{\| (G_k)_i \|_p}
$$
(We adopt the convention that if $(G_k)_i=0$, then $(\tilde{G}_k)_i=0$).

Let's analyze the inner product term $\langle G_k, \tilde{G}_k \rangle_F$ and the norm term $\| \tilde{G}_k \|_F^2$:
$$
\langle G_k, \tilde{G}_k \rangle_F = \sum_{i=1}^m \langle (G_k)_i, (\tilde{G}_k)_i \rangle = \sum_{i=1}^m \frac{\langle (G_k)_i, (G_k)_i \rangle}{\| (G_k)_i \|_p} = \sum_{i=1}^m \frac{\| (G_k)_i \|_2^2}{\| (G_k)_i \|_p}
$$
$$
\| \tilde{G}_k \|_F^2 = \sum_{i=1}^m \| (\tilde{G}_k)_i \|_2^2 = \sum_{i=1}^m \frac{\| (G_k)_i \|_2^2}{\| (G_k)_i \|_p^2}
$$
Provided $G_k \neq 0$, we have $\langle G_k, \tilde{G}_k \rangle_F > 0$, which confirms that $-\tilde{G}_k$ is a descent direction.

\textbf{Step 3. Bounding the Key Ratio}.
To ensure convergence, the step size $\alpha$ must be chosen carefully. From Eq. \eqref{eq:descent}, the decrease in $f$ depends on the relationship between $\langle G_k, \tilde{G}_k \rangle_F$ and $\| \tilde{G}_k \|_F^2$. Let us define their ratio as $\gamma_k$:
$$
\gamma_k = \frac{\| \tilde{G}_k \|_F^2}{\langle G_k, \tilde{G}_k \rangle_F} = \frac{\sum_{i=1}^m \| (G_k)_i \|_2^2 / \| (G_k)_i \|_p^2}{\sum_{i=1}^m \| (G_k)_i \|_2^2 / \| (G_k)_i \|_p}
$$
Let $v_i = (G_k)_i$, $w_i = \|v_i\|_2^2 / \|v_i\|_p > 0$ (for $v_i \neq 0$), and $z_i = \|v_i\|_2 / \|v_i\|_p$. Then $\gamma_k$ can be written as a weighted average of the $z_i$:
$$
\gamma_k = \frac{\sum_{i=1}^m w_i z_i}{\sum_{i=1}^m w_i}
$$
Therefore, the value of $\gamma_k$ must lie between the minimum and maximum values of $z_i$:
$$
\min_{i: (G_k)_i \neq 0} \left( \frac{\|(G_k)_i\|_2}{\|(G_k)_i\|_p} \right) \le \gamma_k \le \max_{i: (G_k)_i \neq 0} \left( \frac{\|(G_k)_i\|_2}{\|(G_k)_i\|_p} \right)
$$
In the finite-dimensional vector space $\mathbb{R}^n$, all norms are equivalent. This means there exist positive constants $\delta$ and $\Gamma$, depending only on the dimension $n$ and the choice of norm $p$, such that for any non-zero vector $v \in \mathbb{R}^n$:
$$
0 < \delta \le \frac{\|v\|_2}{\|v\|_p} \le \Gamma
$$
Consequently, the ratio $\gamma_k$ is uniformly bounded:
$$
0 < \delta \le \gamma_k \le \Gamma
$$

\textbf{Step 4. Ensuring Sufficient Decrease in Function Value}.
Rearranging the inequality for $f(W_{k+1})$ from Eq. \eqref{eq:descent}:
\begin{align}
f(W_{k+1}) & \le f(W_k) - \alpha \langle G_k, \tilde{G}_k \rangle_F \left( 1 - \frac{L\alpha}{2} \frac{\| \tilde{G}_k \|_F^2}{\langle G_k, \tilde{G}_k \rangle_F} \right) \\
& = f(W_k) - \alpha \langle G_k, \tilde{G}_k \rangle_F (1 - \frac{L\alpha\gamma_k}{2})
\end{align}
To guarantee a strict decrease in the sequence $f(W_k)$, we need the term in the parenthesis to be positive. We select a fixed step size $\alpha$ that satisfies this condition for all possible values of $\gamma_k$. Since $\gamma_k \le \Gamma^2$, we must choose $\alpha$ such that:
$$
1 - \frac{L\alpha\Gamma^2}{2} > 0 \implies \alpha < \frac{2}{L\Gamma^2}
$$
Let's choose a step size $\alpha$ such that $0 < \alpha < \frac{2}{L\Gamma^2}$. Let $c = 1 - \frac{L\alpha\Gamma^2}{2}$, which is a positive constant. We then have:
$$
f(W_{k+1}) \le f(W_k) - \alpha(1 - \frac{L\alpha\gamma_k}{2})\langle G_k, \tilde{G}_k \rangle_F \le f(W_k) - c \alpha \langle G_k, \tilde{G}_k \rangle_F
$$
This gives us:
$$
f(W_k) - f(W_{k+1}) \ge c \alpha \langle G_k, \tilde{G}_k \rangle_F
$$

\textbf{Step 5. Proving the Gradient Norm Converges to Zero}.
Summing the above inequality from $k=0$ to $N-1$:
$$
\sum_{k=0}^{N-1} (f(W_k) - f(W_{k+1})) \ge c \alpha \sum_{k=0}^{N-1} \langle G_k, \tilde{G}_k \rangle_F
$$
The left-hand side is a telescoping sum:
$$
f(W_0) - f(W_N) \ge c \alpha \sum_{k=0}^{N-1} \langle G_k, \tilde{G}_k \rangle_F
$$
Since $f$ is bounded below by $f_{inf}$, we have $f(W_N) \ge f_{inf}$. Therefore:
$$
f(W_0) - f_{inf} \ge c \alpha \sum_{k=0}^{N-1} \langle G_k, \tilde{G}_k \rangle_F
$$
As $N \to \infty$, the left-hand side is a finite constant. The right-hand side is the partial sum of a series with non-negative terms. This implies that the series must converge:
$$
\sum_{k=0}^{\infty} \langle G_k, \tilde{G}_k \rangle_F < \infty
$$
A necessary condition for a series to converge is that its general term must approach zero. Thus:
$$
\lim_{k \to \infty} \langle G_k, \tilde{G}_k \rangle_F = \lim_{k \to \infty} \sum_{i=1}^m \frac{\| (G_k)_i \|_2^2}{\| (G_k)_i \|_p} = 0
$$
Now, we use norm equivalence again. There exists a constant $\Gamma_p$ such that $\|v\|_p \le \Gamma_p \|v\|_2$.
$$
\sum_{i=1}^m \frac{\| (G_k)_i \|_2^2}{\| (G_k)_i \|_p} \ge \sum_{i=1}^m \frac{\| (G_k)_i \|_2^2}{\Gamma_p \| (G_k)_i \|_2} = \frac{1}{\Gamma_p} \sum_{i=1}^m \| (G_k)_i \|_2
$$
Since $\langle G_k, \tilde{G}_k \rangle_F \to 0$ and each term in the sum is non-negative, we must have:
$$
\lim_{k \to \infty} \sum_{i=1}^m \| (G_k)_i \|_2 = 0
$$
This further implies that for each $i=1, \ldots, m$, $\lim_{k \to \infty} \|(G_k)_i\|_2 = 0$.
Consequently, the squared Frobenius norm of the entire gradient matrix also tends to zero:
$$
\lim_{k \to \infty} \| G_k \|_F^2 = \lim_{k \to \infty} \sum_{i=1}^m \| (G_k)_i \|_2^2 = 0
$$
which means $\lim_{k \to \infty} \| \nabla f(W_k) \|_F = 0$.

We have shown that under the given assumptions, the norm of the gradient, $\| \nabla f(W_k) \|_F$, converges to 0. By definition of a stationary point, this means that if the sequence $\{W_k\}$ converges to a point $W^*$, then $W^*$ must be a stationary point of $f$ (i.e., $\nabla f(W^*) = 0$).

\end{proof}

The proof of Theorem \ref{thm:4} is tricky. We firstly introduce the auxiliary Lyapunov function. By carefully defining the Lyapunov function, we can prove the final result.

\begin{proof}[Proof of Theorem \ref{thm:4}]
We define a Lyapunov function $L_k$ for $k \geq 0$:
$$ L_k = f(W_k) + c \norm{M_k}_F^2 $$
where $c > 0$ is a constant to be determined later.
By the definition of the $\normal$ operator, for any $k \ge 1$, the rows of $M_k$ are unit vectors in the $\ell_2$-norm. Thus, $\norm{(M_k)_{i,:}}_2^2 = 1$ for all $i=1, \ldots, m$. This implies that for $k \ge 1$, $\norm{M_k}_F^2 = \sum_{i=1}^m \norm{(M_k)_{i,:}}_2^2 = m$. We initialize with $M_0 = \mathbf{0}$, so $\norm{M_0}_F^2 = 0$.

Let's analyze the one-step change in the Lyapunov function for $k \ge 0$.
$$ L_{k+1} - L_k = (f(W_{k+1}) - f(W_k)) + c (\norm{M_{k+1}}_F^2 - \norm{M_k}_F^2) $$
From the $L$-smoothness assumption, we have the descent lemma:
\begin{align*}
f(W_{k+1}) & \le f(W_k) + \langle \nabla f(W_k), W_{k+1} - W_k \rangle + \frac{L}{2} \norm{W_{k+1} - W_k}_F^2 \\
& = f(W_k) - \alpha \langle \nabla f(W_k), M_{k+1} \rangle + \frac{L}{2} \norm{-\alpha M_{k+1}}_F^2 \\
& = f(W_k) - \alpha \langle \nabla f(W_k), M_{k+1} \rangle + \frac{L\alpha^2}{2} \norm{M_{k+1}}_F^2
\end{align*}
Since $\norm{M_{k+1}}_F^2 = m$ for $k \ge 0$, this simplifies to:
$$ f(W_{k+1}) - f(W_k) \le -\alpha \langle \nabla f(W_k), M_{k+1} \rangle + \frac{L\alpha^2 m}{2} $$
Substituting this into the Lyapunov difference equation yields:
\begin{equation} \label{equ:9}
L_{k+1} - L_k \le -\alpha \langle \nabla f(W_k), M_{k+1} \rangle + \frac{L\alpha^2 m}{2} + c (\norm{M_{k+1}}_F^2 - \norm{M_k}_F^2) 
\end{equation}

The key is to relate the inner product term to quantities we can control. From the algorithm's definition, we have $(1-\mu)\nabla f(W_k) = M'_{k+1} - \mu M_k$. Therefore,
\begin{align*}
\langle \nabla f(W_k), M_{k+1} \rangle &= \frac{1}{1-\mu} \langle M'_{k+1} - \mu M_k, M_{k+1} \rangle \\
&= \frac{1}{1-\mu} \left( \langle M'_{k+1}, M_{k+1} \rangle - \mu \langle M_k, M_{k+1} \rangle \right)
\end{align*}
By the definition of the normalization, 
$$\langle M'_{k+1}, M_{k+1} \rangle = \sum_{i=1}^m \langle (M'_{k+1})_{i,:}, \frac{(M'_{k+1})_{i,:}}{\norm{(M'_{k+1})_{i,:}}_2} \rangle = \sum_{i=1}^m \norm{(M'_{k+1})_{i,:}}_2.$$ Let's denote this term by $h_k$.
For the second inner product, we use Young's inequality ($2\langle a,b \rangle \le \norm{a}_F^2 + \norm{b}_F^2$):
$$ -\mu \langle M_k, M_{k+1} \rangle \ge -\frac{\mu}{2} (\norm{M_k}_F^2 + \norm{M_{k+1}}_F^2) $$
Combining these results, we get a lower bound for the inner product:
$$ \langle \nabla f(W_k), M_{k+1} \rangle \ge \frac{1}{1-\mu} \left( h_k - \frac{\mu}{2}(\norm{M_k}_F^2 + \norm{M_{k+1}}_F^2) \right) $$
Now, substitute this back into (\ref{equ:9}):
$$ L_{k+1} - L_k \le -\frac{\alpha}{1-\mu} \left( h_k - \frac{\mu}{2}(\norm{M_k}_F^2 + \norm{M_{k+1}}_F^2) \right) + \frac{L\alpha^2 m}{2} + c (\norm{M_{k+1}}_F^2 - \norm{M_k}_F^2) $$
Rearranging the terms based on $\norm{M_k}_F^2$ and $\norm{M_{k+1}}_F^2$:
$$ L_{k+1} - L_k \le -\frac{\alpha h_k}{1-\mu} + \left( \frac{\alpha\mu}{2(1-\mu)} - c \right) \norm{M_k}_F^2 + \left( \frac{\alpha\mu}{2(1-\mu)} + c \right) \norm{M_{k+1}}_F^2 + \frac{L\alpha^2 m}{2} $$
To simplify the expression, we choose the constant $c$ to eliminate the $\norm{M_k}_F^2$ term:
$$ c = \frac{\alpha\mu}{2(1-\mu)} $$
Since $0 < \mu < 1$ and $\alpha > 0$, we have $c > 0$. With this choice of $c$, the inequality becomes:
\begin{align*}
L_{k+1} - L_k &\le -\frac{\alpha h_k}{1-\mu} + \left( \frac{\alpha\mu}{2(1-\mu)} + \frac{\alpha\mu}{2(1-\mu)} \right) \norm{M_{k+1}}_F^2 + \frac{L\alpha^2 m}{2} \\
&= -\frac{\alpha h_k}{1-\mu} + \frac{\alpha\mu}{1-\mu} \norm{M_{k+1}}_F^2 + \frac{L\alpha^2 m}{2}
\end{align*}
Since $\norm{M_{k+1}}_F^2 = m$, we have:
\begin{equation} \label{equ:10}
L_{k+1} - L_k \le -\frac{\alpha h_k}{1-\mu} + \frac{\alpha\mu m}{1-\mu} + \frac{L\alpha^2 m}{2} 
\end{equation}
Next, we relate $h_k = \sum_{i=1}^m \norm{(M'_{k+1})_{i,:}}_2$ to the gradient norm sum $g_k = \sum_{i=1}^m \norm{(\nabla f(W_k))_{i,:}}_2$.
Using the reverse triangle inequality on the sum of row norms:
\begin{align*}
h_k = \sum_{i=1}^m \norm{(\mu M_k + (1-\mu) \nabla f(W_k))_{i,:}}_2 & \ge \sum_{i=1}^m \left( \norm{((1-\mu) \nabla f(W_k))_{i,:}}_2 - \norm{(\mu M_k)_{i,:}}_2 \right) \\
& = (1-\mu) \sum_{i=1}^m \norm{(\nabla f(W_k))_{i,:}}_2 - \mu \sum_{i=1}^m \norm{(M_k)_{i,:}}_2 \\
& = (1-\mu)g_k - \mu \sum_{i=1}^m 1 = (1-\mu)g_k - \mu m
\end{align*}
Note that for $k=0$, $M_0=\mathbf{0}$, so $h_0 \ge (1-\mu)g_0$. The inequality $h_k \ge (1-\mu)g_k - \mu m$ holds for all $k \ge 0$.
Substituting this lower bound for $h_k$ into (\ref{equ:10}):
\begin{align*}
L_{k+1} - L_k &\le -\frac{\alpha}{1-\mu}((1-\mu)g_k - \mu m) + \frac{\alpha\mu m}{1-\mu} + \frac{L\alpha^2 m}{2} \\
&= -\alpha g_k + \frac{\alpha\mu m}{1-\mu} + \frac{\alpha\mu m}{1-\mu} + \frac{L\alpha^2 m}{2} \\
&= -\alpha g_k + \frac{2\alpha\mu m}{1-\mu} + \frac{L\alpha^2 m}{2}
\end{align*}
Let $C = \frac{2\alpha\mu m}{1-\mu} + \frac{L\alpha^2 m}{2}$. We have the key recursive inequality:
$$ \alpha g_k \le L_k - L_{k+1} + C $$
Summing from $k=0$ to $N-1$:
$$ \alpha \sum_{k=0}^{N-1} g_k \le \sum_{k=0}^{N-1} (L_k - L_{k+1}) + \sum_{k=0}^{N-1} C = (L_0 - L_N) + N C $$
Let's analyze the boundary terms. $L_0 = f(W_0) + c\norm{M_0}_F^2 = f(W_0)$. The Lyapunov function is bounded below because $L_N = f(W_N) + c\norm{M_N}_F^2 \ge f^* + c \cdot 0 = f^*$ (since $\norm{M_N}_F^2 \ge 0$).
Thus, $L_0 - L_N \le f(W_0) - f^*$.
$$ \alpha \sum_{k=0}^{N-1} g_k \le f(W_0) - f^* + N C $$
Dividing by $N\alpha$:
$$ \frac{1}{N} \sum_{k=0}^{N-1} g_k \le \frac{f(W_0) - f^*}{N\alpha} + \frac{C}{\alpha} $$
Since the minimum is less than or equal to the average, we have:
$$ \min_{0 \le k < N} g_k \le \frac{f(W_0) - f^*}{N\alpha} + \frac{1}{\alpha}\left( \frac{2\alpha\mu m}{1-\mu} + \frac{L\alpha^2 m}{2} \right) $$
$$ \min_{0 \le k < N} g_k \le \frac{f(W_0) - f^*}{N\alpha} + \frac{2\mu m}{1-\mu} + \frac{L\alpha m}{2} $$
Taking the limit as $N \to \infty$, the first term vanishes:
$$ \liminf_{k \to \infty} g_k \le \frac{L\alpha m}{2} + \frac{2\mu m}{1-\mu} $$
This completes the proof.
\end{proof}

\section{Distinction from Normalized Gradient Descent}

The proposed optimizer is fundamentally distinct from the Normalized Gradient Descent (NGD) algorithm \citep{nesterov_1984_minimization, cortes_2006_finite}. While the update rule for scalar and vector parameters is identical to that of NGD, a significant divergence emerges when considering higher-dimensional parameters, particularly matrices.

For a matrix parameter $W$, the NGD update rule scales the gradient matrix $\nabla f(W)$ by its Frobenius norm, which is equivalent to a global learning rate adjustment. This approach preserves the directional information of the gradient unaltered. In contrast, our proposed optimizer not only modifies the learning rate but also applies a structural transformation to the gradient $\nabla f(W)$ itself. This transformation, which includes both a RACS transformation and a magnitude scaling, results in optimization dynamics that are fundamentally different from those of NGD.

Furthermore, we conducted an empirical evaluation to compare the performance of NGD under different learning rates on mathematical tasks. The learning rate was initialized with a 5\% warm-up and subsequently decayed using a cosine schedule. The results, presented in Table \ref{table:6}, indicate a strong dependence of NGD on a large learning rate for optimal performance.
\begin{table*}[htbp]
\centering
\renewcommand{\arraystretch}{1.3}
\begin{tabular}{l |c c c c}
\toprule
Datasets & lr=3e-2 & lr=3e-3 & lr=3e-4 & lr=3e-5 \\
\hline
GSM8K (\%) & 48.4 & \bf{70.1} & 31.1 & 25.8 \\
MATH500 (\%) & 57.4 & \bf{60.2} & 53.0 & 24.8 \\
AIME24 (\%) & 6.7 & 6.7 & \bf{13.3} & 10.0 \\
\hline
Average (\%) &30.8 & \bf{45.7} & 32.4 & 20.2 \\
\bottomrule
\end{tabular}
\caption{Performance comparison of NGD with different learning rates across mathematical datasets.}
\label{table:6}
\end{table*}
Analysis of the training loss revealed that at the large learning rates required for good performance, the loss exhibited minimal or no decrease during the initial training phase. This suggests that NGD is poorly suited for the fine-tuning of pre-trained models. One potential explanation is the high variance in the average magnitude of the NGD updates, which can lead to training instability when compared to optimizers like AdamW and our proposed method.

Specifically, for the update values $\tilde{M}$ in NGD, the RMS of the update is given by:
\begin{equation*}
\mathrm{RMS}(\tilde{M})^2 = \frac{1}{mn} \sum_{i=1}^m \sum_{j=1}^n \left( \frac{M_{ij}}{\| M \|_F} \right)^2 = 1.
\end{equation*}
This implies that the average magnitude for each entry of the matrix is fixed at $\frac{1}{\sqrt{mn}}$. For LLMs, the parameters of different layers (e.g., embedding layers, attention layers, and MLP layers) possess vastly different scales. This fixed magnitude for updates across varying layer scales makes it challenging for a single learning rate to appropriately adjust all matrix parameters.

In contrast, our proposed optimizer rescales the magnitude of each matrix according to its RMS norm and further balances the magnitudes of different channels within a matrix. These operations collectively contribute to a more stable training process. This distinction is particularly significant in the context of Large Language Models, where the vast majority of trainable parameters are represented by matrices. Consequently, the performance of our optimizer is expected to diverge significantly from that of NGD during the training of such models.

\section{Pretraining Experiments}

In this section, we present the pretraining experiments conducted to evaluate the performance of the proposed regularized optimizer. We trained a Qwen-like model using various optimizers for comparative analysis. The training framework was adapted from the Moonlight repository, accessible at \url{https://github.com/MoonshotAI/Moonlight}.

The model architecture is a Qwen2-like configuration with a hidden size of 896, an intermediate size of 4864, 16 attention heads, and 12 hidden layers. The model was trained on the ``Elriggs/openwebtext-100k" dataset for one epoch, consistent with the settings specified in the Moonlight framework. The training was performed with a learning rate of 1e-4 and 100 warm-up steps. The progression of the training loss is illustrated in Figure \ref{fig:pretrain}.

\begin{figure}[h]
\centering
\includegraphics[width=0.95\linewidth]{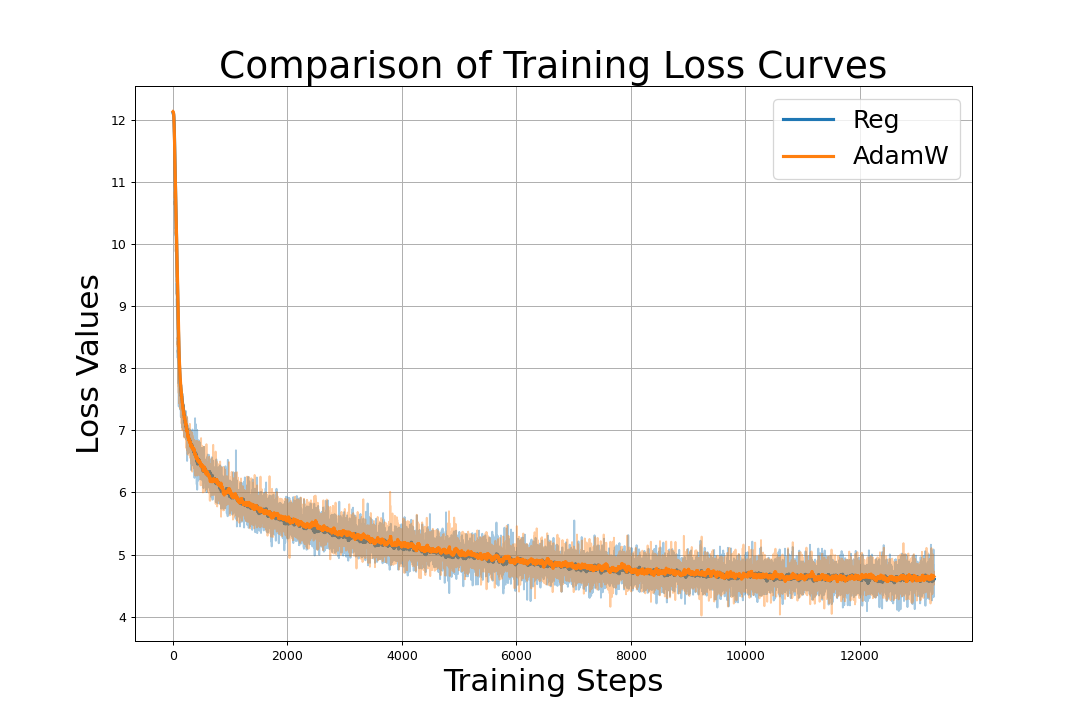}
\caption{Training loss curves for AdamW and the regularized optimizer (REG) on the openwebtext-100k dataset.}
\label{fig:pretrain}
\end{figure}

As depicted in Figure \ref{fig:pretrain}, the training losses of the AdamW and the regularized optimizer are notably similar. This observation suggests that the regularized optimizer achieves a competitive performance level with AdamW, despite utilizing only the first momentum term.

\section{The \textbf{REG} Regularization Operator}

In this study, the regularization operator, denoted as $\mathrm{reg}(\cdot)$, is defined as a normalization procedure applied to the update matrix $M$. Specifically, we employ a single pass of either row-wise or column-wise normalization.

The choice of this simplified operator warrants justification, particularly when contrasted with established practices in numerical analysis. It is common in the numerical analysis literature to iteratively apply row and column normalizations to a matrix to ensure desirable properties, like Ruiz normalization \citep{ruiz_2001_scaling} or Pock-Chambolle normalization \citep{pock_2011_diagonal}. However, our empirical results from SFT experiments indicate that a single normalization step is sufficient for effective regularization. We observed that applying additional normalization iterations yielded no significant performance gains and could adversely affect the training dynamics. Consequently, we adopt the most straightforward implementation that proved effective.

For the sake of completeness, we define a generalized version of the \textbf{REG} optimizer in Algorithm \ref{algo:general_reg_optimizer}, which allows for multiple normalization iterations.

\begin{algorithm}[h!]
\caption{The Generalized \textbf{REG} Optimizer}
\label{algo:general_reg_optimizer}
\textbf{Input:} Iteration count for normalization $t \in \mathbb{N}$, learning rate $\alpha > 0$, momentum hyperparameter $\mu \in (0, 1)$, norm order $p \in [1, +\infty]$, RMS-norm target $\rho_{\mathrm{target}}$, weight decay $\lambda \ge 0$.
\begin{algorithmic}[1]
\For{each training step $k = 0, 1, 2, \ldots$}
    \State \text{// Compute update matrix with momentum}
    \State $M_{k+1} \gets \mu M_k + (1-\mu) \nabla f(W_k)$
    
    \State \text{// Apply iterative normalization}
    \State $\tilde{M} \gets M_{k+1}$
    \For{$i = 1, \ldots, t$}
        \State \text{// Row normalization}
        \State $\tilde{M} \gets \mathrm{diag}(\| \tilde{M}_{1,:} \|_p^{-1}, \ldots, \| \tilde{M}_{m,:} \|_p^{-1}) \tilde{M}$
        \State \text{// Column normalization}
        \State $\tilde{M} \gets \tilde{M} \, \mathrm{diag}(\| \tilde{M}_{:,1} \|_p^{-1}, \ldots, \| \tilde{M}_{:,n} \|_p^{-1})$
    \EndFor
    
    \State \text{// Scale to target RMS value}
    \State $\hat{M}_{k+1} \gets \tilde{M} \cdot \frac{\rho_{\mathrm{target}}}{\mathrm{RMS}(\tilde{M})}$
    
    \State \text{// Update weights with regularized gradient}
    \State $W_{k+1} \gets W_k - \alpha (\hat{M}_{k+1} + \lambda W_k)$
\EndFor
\end{algorithmic}
\end{algorithm}

The specific optimizer used throughout our experiments is a special case of Algorithm \ref{algo:general_reg_optimizer} where the normalization iteration count is set to $t=1$. While values of $t>1$ were found to yield marginal performance improvements, these gains were not statistically significant in our experimental setting.


\end{document}